\relax
%File: formatting-instruction.tex
\documentclass[letterpaper]{article} %DO NOT CHANGE THIS
\usepackage{aaai17}  %Required
\usepackage{times}  %Required
\usepackage{helvet}  %Required
\usepackage{courier}  %Required
\usepackage{url}  %Required
\usepackage{graphicx}  %Required
\frenchspacing  %Required
\setlength{\pdfpagewidth}{8.5in}  %Required
\usepackage{amsfonts}
\usepackage{amssymb}
\usepackage{amsmath}
\usepackage{amsthm}
\usepackage{algorithm}
\usepackage{algorithmic}
\usepackage[colorinlistoftodos,prependcaption,textsize=tiny]{todonotes}
\newtheorem{theorem}{Theorem}
\newcommand{\namecite}[1]{\citeauthor{#1}~(\citeyear{#1})}

\setlength{\pdfpageheight}{11in}  %Required
%PDF Info Is Required:
  \pdfinfo{
/Title (Summable Reparameterizations of Wasserstein Critics in the One-Dimensional Setting)
/Author (us)}
\setcounter{secnumdepth}{0}  
 \begin{document}
% The file aaai.sty is the style file for AAAI Press 
% proceedings, working notes, and technical reports.
%
\title{Summable Reparameterizations of Wasserstein Critics\\  in the One-Dimensional Setting}
\author{Christopher Grimm$^1$, \textbf{Yuhang Song}$^2$, \textbf{Michael L. Littman}$^3$ \\
  $^{1}$\textbf{University of Michigan}, $^{2}$\textbf{Beihang University},  $^{3}$\textbf{Brown University}\\
  crgrimm@umich.edu, yuhangsong@buaa.edu.cn, michael\_littman@brown.edu}
\maketitle
\begin{abstract}
Generative adversarial networks (GANs) are an exciting alternative to algorithms for solving density estimation problems---using data to assess how likely samples are to be drawn from the same distribution. Instead of explicitly computing these probabilities, GANs learn a generator that can match the given probabilistic source. This paper looks particularly at this matching capability in the context of problems with one-dimensional outputs. We identify a class of function decompositions with properties that make them well suited to the critic role in a leading approach to GANs known as Wasserstein GANs. We show that Taylor and Fourier series decompositions belong to our class, provide examples of these critics outperforming standard GAN approaches, and suggest how they can be scaled to higher dimensional problems in the future.
\end{abstract}

\section{Introduction}
Generative Adversarial Networks (GANs), introduced by \namecite{goodfellow2014generative}, have quickly become a leading technique for learning to generate data points matching samples from a distribution. GANs produce samples without directly modeling the target probability distribution. They do so by jointly training two neural networks: a \emph{generator}, which attempts to produce synthetic data points in a way that is consistent with the source distribution, and a \emph{discriminator}, which seeks to determine whether any given data point was drawn from the source distribution or the generated one. 

This joint training procedure is difficult to stabilize and many conceptual variants of the GAN framework have been proposed to improve results. We focus specifically on one such variant: the Wasserstein GAN or wGAN~\cite{Arjovsky2017WassersteinG}. While the standard GAN framework is derived as a minimax game between two agents, the wGAN framework reformulates the problem in terms of minimizing a distance metric between two probability distributions. Particularly, wGAN is formulated using the dual form of the Earth-Mover's distance, which can be reasonably approximated by a neural network. This construction results in a similar two-network setup, with one network acting as a generator and another acting as a \emph{critic}---its role is to maintain an estimate of the Earth-Mover's distance between the generator's distribution and the target distribution in a functional form that can be used as a guide to improving the generator.

Informally, Earth-Mover's distance between two probability distributions can be thought of as the amount of ``work'' that would go into transporting probability mass within each distribution to make them indistinguishable. A particularly nice property of Earth-Mover's distance is that, under mild constraints, it has a defined gradient almost-everywhere \cite{Arjovsky2017WassersteinG}, making it ideal for gradient-based optimization. To optimize over the space of critics, the optimizer must ensure that the functions it produces are $k$-Lipschitz---that the norm of their gradients is less than some scalar $k$ over the domain. A popular approach toward enforcing this constraint~\cite{Gulrajani2017ImprovedTO} involves assigning a penalty to functions that violate it on a subset of the domain. While this approach has been shown to produce visually appealing results on a variety of popular image benchmarks \cite{Gulrajani2017ImprovedTO}, there is no guarantee that the critic network will converge to the optimal critic. This failure of the critic to achieve optimality can result in generators that diverge or suffer from mode-collapse. In this work, we introduce a reparameterization of the critic network in the one-dimensional setting that has guarantees on its convergence. We show that this reparameterized critic performs better than standard gradient-penalty wGAN approaches on a set of one-dimensional simulated domains.

\section{Background}

This section provides necessary mathematical background and also summarizes related work.

\subsection{Generative Adversarial Networks}

Generative Adversarial Networks are traditionally introduced in a setting where there is some target (``real'') data source $P_r$ from which samples can be drawn. The GAN itself is defined in terms of two distinct network components: a generator, $G_\theta : \mathbb{R}^m \mapsto \mathbb{R}^n$ and a discriminator, $D_\phi : \mathbb{R}^n \mapsto [0, 1]$. The generator takes randomly sampled noise $z \sim \mathbb{P_Z}$ as input, and produces ``generated samples'' distributed according to $P_{G_\theta}$ as output. The discriminator takes real or generated data points as input and returns a scalar indicating whether a given input is real or generated. These networks are trained together in a mini-max game with the following objective:
\begin{eqnarray}\label{eq:gan_minimax}
\lefteqn{\min_{\theta}\max_{\phi} V(D_\phi, G_\theta)}\\
&=& \mathbb{E}_{P_R}\left[\log D_\phi(X)\right] + \mathbb{E}_{P_{G_\theta}}\left[ \log(1 - D_\phi(X))\right]. \nonumber
\end{eqnarray}

To optimize this objective, the generator and discriminator networks take turns, updating their own parameters while the other network's parameters are held fixed. Collectively, this objective can be thought of as the ``certainty'' of the discriminator. The generator aims to minimize this certainty, and in doing so, produce generated samples that are distributionally indistinguishable from those drawn from the real data-generating source. Conversely, the discriminator aims to maximize its own certainty by learning to discern real samples from generated ones, providing pressure on the generator to more closely match the real distribution.

\subsection{Wasserstein Generative Adversarial Networks}

As an alternative to this game-theoretic approach, Wasserstein GANs seek to minimize the Earth-Mover's distance
% given by Eq.~(\ref{eq:earth-movers})
between two probability distributions
\begin{equation}\label{eq:earth-movers}
W(P_r, P_g) = \underset{\gamma \in \Gamma(P_r, P_g)}{\text{inf}} \mathbb{E}_{(x, y) \sim \gamma}\left[ \| x - y \| \right],
\end{equation}
where $\Gamma(P_r, P_g)$ represents the set of all joint probability distributions with $P_r$ and $P_g$ as marginal distributions. 

While the represention of Earth-Mover's distance provided in Eq.~(\ref{eq:earth-movers}) is not tractable to compute, it can be approximated in its dual form \cite{villani2008optimal}
\begin{equation}\label{eq:earth-movers-dual}
W(P_r, P_g) = \underset{\|f\|_L \leq 1}{\text{sup}} \mathbb{E}_{P_r} \left[ f(X) \right] - \mathbb{E}_{P_g} \left[ f(X) \right],
\end{equation}
where $\|\cdot\|_L \leq 1$ denotes the space of $1$-Lipschitz functions. 

Eq.~(\ref{eq:earth-movers-dual}) can be optimized similarly to the GAN setup described above. The same form of the generator network $G_\theta$ is used to produce generated data samples. However, in place of a discriminator, a critic network $f_\phi$ is used to represent the function $f$ in Eq.~(\ref{eq:earth-movers-dual}). Collectively, the resulting optimization procedure takes the following form:
\begin{equation}\label{eq:wgan-loss}
\min_{\theta} W(P_r, P_g) = \max_{\phi}  \mathbb{E}_{P_r} \left[ f_\phi(X) \right] - \mathbb{E}_{P_{G_\theta}} \left[ f_\phi(X) \right],
\end{equation}
where the critic network, $f_\phi$, is required to span a sufficiently large class of functions to approximate the supremum in Eq.~(\ref{eq:earth-movers-dual}). 

In this setting, the critic updates its parameters successively while the generator network is held fixed to maximize the above expression. After $W(P_r, P_g)$ is sufficiently maximized, the critic's parameters are frozen and the generator network takes a step to minimize $W$.

It is important to note that during this procedure special care must be taken to ensure that the critic function, $f_\phi$, belongs to the class of $1$-Lipschitz functions. The most successful method of ensuring this property is by applying a gradient penalty as an additional term in the loss function~\cite{Gulrajani2017ImprovedTO}. 

Bearing this constraint in mind, the success of gradient-based optimization relies heavily on the parameter space of the optimized function being ``nice'' in a topological sense. Particularly, a highly concave or disconnected parameter space provides a much harder optimization problem and increases the likelihood of the optimizer settling on a local optimum. Imposing a $1$-Lipschitz constraint on the optimization procedure certainly complicates the topology of the parameter space. 

\subsection{Taylor Series Approximations}

The Taylor series is a popular method that approximates a function with a sum of polynomials of increasing degree. In the one-dimensional setting, it can be expressed as 
\begin{equation}
f(x) = \sum_{n=0}^\infty \frac{f^{(n)}(x_0)}{n!}(x - x_0)^n,
\end{equation}
where $f^{(n)}$ denotes the $n$th derivative of $f$, $n!$ denotes the factorial of $n$, and $x_0$ is an arbitrary point in the domain of $f$. It is important to note the approximation is centered at $x_0$---as $x$ moves away from this central point, the approximation can become less accurate.

\subsection{Fourier Series Approximations}

The Fourier series is another method of approximating functions with a sum of functions, this time sinusoidal functions of decreasing periodicity. In the one-dimensional setting, it can be expressed as
\begin{equation}
f(x) = a_0 / 2 + \sum_{n=1}^\infty a_n \sin\left(\frac{2\pi n x}{P}\right) + b_n \cos \left(\frac{2\pi n x}{P}\right),
\end{equation}
where the sequences $(a_n)$ and $(b_n)$ are parameters particular to the function $f$ and $P$ is the period of the resulting approximation. 

\section{Derivation of the Summable Critic}

We now define a set of properties for a critic representation that we will show leads to improvements on the wGAN framework.

Let $\xi_{n,m} : [-1, 1] \mapsto \mathbb{R}$ be some sequence of functions such that both the functions and their derivatives are bounded: $\max_{x} |\xi_{n,m}(x)| \leq 1$ and 
$\max_{x} |\xi_{n,m}^\prime(x)| \leq b_{n,m}$ for all $n, m$, where $b_{n,m}$ is some bounding constant that does not depend on $x$.

We then define the weighted sum of these functions as follows:
\begin{equation}
    f_{A, \xi}(x) = \sum_{n=1}^\infty \sum_{m=1}^M A_{n,m} \xi_{n,m}(x),
\end{equation}
where $A_{n,m} \in \mathbb{R}$ and $M \in \mathbb{Z}^+$. 

From the above properties, we can derive an upper bound on the gradient of $f_A$:
\begin{equation}
\begin{aligned}
\max_x \left|f^\prime_{A,\xi}(x)\right|
&= \max_x \left|\sum_{n=1}^\infty \sum_{m=1}^M A_{n,m} \xi^\prime_{n,m}(x) \right|\\
&\leq \max_x \sum_{n=1}^\infty \sum_{m=1}^M \left| A_{n,m} \right| \left| \xi^\prime_{n,m}(x) \right| \\
&\leq \sum_{n=1}^\infty \sum_{m=1}^M \left|A_{n,m}\right| b_{n,m}.
\end{aligned}
\end{equation}

To simply our notation, we refer to this upper bound as 
\begin{equation}\label{eq:upperbound}
    \mathcal{L}_\xi(A) = \sum_{n=1}^\infty \sum_{m=1}^M |A_{n,m}| b_{n,m}.
\end{equation}

If we further assume that most functions $f: [-1, 1] \mapsto \mathbb{R}$ can be expressed by $f_{A, \phi}$ for some sequence of coefficients $(A_{n,m})$, then we can express the dual-form of the Wasserstein distance between distributions with support $[-1, 1]$ as follows:
\begin{eqnarray}\label{eq:summable_parameterization}
\lefteqn{W(P_r, P_g)}\\
%&=& \underset{\max_x|\sum_{n=1}^\infty A_n f^\prime_n(x) | \leq 1}{\sup}\sum_{n=0}^\infty A_n \left(\mathbb{E}_{P_r}[f_n(x)] - \mathbb{E}_{P_g}[f_n(x)]\right) \nonumber\\
&\approx& \underset{||f_{A, \xi}||_L \leq 1}{\sup}\; \mathbb{E}_{P_r}\left[ f_{A, \xi}(x)\right] - \mathbb{E}_{P_g}\left[ f_{A, \xi}(x)\right] \nonumber\\
&=& \underset{||f_{A, \xi}||_L \leq 1}{\sup}\; \sum_{n=0}^\infty \sum_{m=1}^M A_{n,m}(\mathbb{E}_{P_r}[\xi_{n,m}(x)] \\
&&- \mathbb{E}_{P_g}[\xi_{n,m}(x)]) \nonumber\\
&\approx& \underset{\mathcal{L}_\xi(A) \leq 1}{\sup}\; \sum_{n=0}^\infty \sum_{m=1}^M A_{n,m} (\mathbb{E}_{P_r}[\xi_{n,m}(x)] \\
&&- \mathbb{E}_{P_g}[\xi_{n,m}(x)]),\nonumber
\end{eqnarray}
where the last approximation follows from the property that $|f^\prime_{A, \xi}(x) | \leq \mathcal{L}_\xi(A)$ for all $x \in [-1, 1]$. 

We emphasize again how, under this new parameterization, the critic's parameters are the coefficients $(A_{n,m})$. The structure of these parameters in the constraint and objective function give us useful properties.
Specifically, our approximation of the $1$-Lipschitz constraint is convex and the optimization objective with respect to the critic is linear. Hence, we have the following theorem.

\begin{theorem} \label{thm:global_max}
\normalfont
During the optimization of a summable critic network, any setting of parameters that is a local maximum is also a global maximum. 
\end{theorem}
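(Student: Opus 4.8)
The plan is to recognize Theorem~\ref{thm:global_max} as an instance of the classical fact that a local maximum of a concave function over a convex set is automatically a global maximum, and to supply the short line-segment argument specialized to the summable-critic setting. Two ingredients suffice, both already established in the text above: first, the feasible region $\mathcal{F} = \{A : \mathcal{L}_\xi(A) \leq 1\}$ is convex, since $\mathcal{L}_\xi$ is absolutely homogeneous and satisfies the triangle inequality (it is a weighted $\ell_1$-type seminorm on the coefficient sequences), so any sublevel set of it is convex; second, the critic objective $J(A) = \sum_{n,m} A_{n,m}\bigl(\mathbb{E}_{P_r}[\xi_{n,m}(x)] - \mathbb{E}_{P_g}[\xi_{n,m}(x)]\bigr)$ is linear in $A$, hence both concave and convex.

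First I would fix notation: let $A^\star \in \mathcal{F}$ be a local maximum of $J$ over $\mathcal{F}$, meaning there is a neighborhood $U$ of $A^\star$ (in whatever norm is placed on the coefficient space) with $J(A) \leq J(A^\star)$ for all $A \in U \cap \mathcal{F}$. Suppose for contradiction that $A^\star$ is not a global maximum, so there exists $A' \in \mathcal{F}$ with $J(A') > J(A^\star)$. For $t \in [0,1]$ set $A_t = (1-t)A^\star + tA'$. Convexity of $\mathcal{F}$ gives $A_t \in \mathcal{F}$, and linearity of $J$ gives $J(A_t) = (1-t)J(A^\star) + tJ(A') = J(A^\star) + t\bigl(J(A') - J(A^\star)\bigr) > J(A^\star)$ for every $t \in (0,1]$. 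Since $\|A_t - A^\star\| = t\,\|A' - A^\star\| \to 0$ as $t \to 0^+$, for small enough $t$ we have $A_t \in U \cap \mathcal{F}$ while $J(A_t) > J(A^\star)$, contradicting the local maximality of $A^\star$. Hence every local maximum is a global maximum.

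The point requiring care — and where I expect to spend the most effort making things rigorous — is that the parameter space here is infinite-dimensional: $A$ ranges over sequences $(A_{n,m})_{n \geq 1,\, 1 \leq m \leq M}$. Two bookkeeping items must be pinned down. (i) A norm/topology relative to which ``local maximum'' is understood; the argument above is insensitive to this choice, since the segment $t \mapsto A_t$ is continuous at $t = 0$ in every norm, so only a minimal hypothesis is needed. (ii) The fact that $J(A)$ is actually well-defined and finite on $\mathcal{F}$: here one uses $|\mathbb{E}_{P_r}[\xi_{n,m}(x)] - \mathbb{E}_{P_g}[\xi_{n,m}(x)]| \leq 2$ (from $\max_x|\xi_{n,m}(x)| \leq 1$) together with the summability of the coefficients assumed in the problem setup, so that the double series defining $J$ converges absolutely and the term-by-term identity $J(A_t) = (1-t)J(A^\star) + tJ(A')$ is legitimate. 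Once these points are stated, the convexity argument is immediate, and no further analytic machinery (compactness of $\mathcal{F}$, existence of a maximizer, differentiability of $J$) is needed for the stated claim.
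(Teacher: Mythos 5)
Your proposal is correct and follows essentially the same route as the paper's proof: establish convexity of the feasible set $\{A : \mathcal{L}_\xi(A) \leq 1\}$ via the triangle inequality on $\mathcal{L}_\xi$, note that $J$ is linear in $A$, and run the line-segment contradiction argument along $A_t = (1-t)A^\star + tA'$. If anything, your version is slightly tighter logically---you compare $A^\star$ against an \emph{arbitrary} feasible point with larger objective, whereas the paper's contradiction is phrased against a second local maximum $B^*$---and your remarks on well-definedness of $J$ on the infinite-dimensional coefficient space are a reasonable refinement the paper omits.
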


\begin{proof}
\normalfont
Consider two critic networks: 
\begin{equation*}
\begin{aligned}
&f_{A, \xi}(x) = \sum_{n=1}^\infty \sum_{m=1}^M A_{n,m}\xi_{n,m}(x) \\ 
&f_{B, \xi}(x) = \sum_{n=1}^\infty \sum_{m=1}^M B_{n,m}\xi_{n,m}(x),
\end{aligned}
\end{equation*}
and suppose that each critic satisfies the constraint given by Eq.~(\ref{eq:upperbound}): $\mathcal{L}_\xi(A) \leq 1$ and $\mathcal{L}_\xi(B) \leq 1$. 

Next, consider any critic produced by linearly interpolating between the parameters of $f_{A, \xi}$ and $f_{B, \xi}$. Denote such a critic as 
\begin{equation*}
f_{\alpha A + (1 - \alpha) B, \xi}(x) = \sum_{n=1}^\infty \sum_{m=1}^M \left(\alpha A_{n,m} + (1 - \alpha) B_{n,m}\right)\xi_{n,m}(x).
\end{equation*}

We can then bound the derivative of the interpolated critic by
\begin{equation*}
\begin{aligned}
&\left|\sum_{n=1}^\infty \sum_{m=1}^M \left(\alpha A_{n,m} + (1 - \alpha) B_{n,m}\right)\xi^\prime_{n,m}(x) \right| \\
&\leq \sum_{n=1}^\infty \sum_{m=1}^M \left|\alpha A_{n,m} + (1 - \alpha) B_{n,m}\right| \left|\xi^\prime_{n,m}(x)\right| \\
&\leq \alpha \mathcal{L}_\xi(A) + (1 - \alpha) \mathcal{L}_\xi(B) \\
&\leq \alpha + (1-\alpha) \\
&= 1,
\end{aligned}
\end{equation*}
where the last inequality is due to both $f_{A, \xi}$ and $f_{B, \xi}$ satisfying our modified 1-Lipschitz constraint: $\mathcal{L}_\xi(\cdot) \leq 1$. 

Hence, any critic linearly interpolated between two critics satisfying $\mathcal{L}_\xi(\cdot) \leq 1$ also satisfies the constraint. Thus, the space of critics that satisfies our constraint is convex under our summable parameterization.

Next, we observe that the Wasserstein distance under a summable critic parameterization is linear in the parameters of the critic. Collectively, the procedure of maximizing the Wasserstein distance with respect to the critic parameters now has a convex objective and a convex constraint. 

Let 
\begin{equation}
\begin{aligned}
J(A) &= W(P_r, P_g, f_{A, \xi})\\
&= \mathbb{E}_{P_r}\left[f_{A, \xi}(x)\right] - \mathbb{E}_{P_g}\left[f_{A, \xi}(x)\right]
\end{aligned}
\end{equation}
be the critic's objective function. Notice how, under the approximation in Eq.~(\ref{eq:summable_parameterization}), $J$ is the equation for a hyperplane. Thus, we have
\begin{equation*}
J(\alpha A + (1 - \alpha) B) = \alpha J(A) + (1 - \alpha) J(B)
\end{equation*}
for any settings $A$ and $B$ of the critic's parameters.

Next, suppose that $A^*$ and $B^*$ satisfy our constraint and are local maxima of $J$ with $J(B^*) < J(A^*)$ without loss of generality. If $A^*$ and $B^*$ exist, then, from the convexity of the constraint, all interpolations $Z(\alpha) = \alpha A^* + (1-\alpha)B^*$ also satisfy the constraint. Thus, we can consider the sequence of parameter settings given by $Z_t = Z(1/t)$. Clearly, $\lim_{t \to \infty} Z_t = B^*$ and each $Z_t$ satisfies the constraint. Moreover, we can write
\begin{equation*}
\begin{aligned}
J(Z_t) &= \frac{1}{t} J(A^*) + (1 - \frac{1}{t}) J(B^*) \\
&> \frac{1}{t} J(B^*) + (1 - \frac{1}{t}) J(B^*) \\
&= J(B^*).
\end{aligned}
\end{equation*}
Since we can construct a sequence of parameters $Z_t$ that approaches $B^*$ with $J(Z_t) > J(B^*)$ for each $Z_t$, this contradicts $B^*$ being a local maximum. Hence, any local maximum must be a global maximum.
\end{proof}

As a result, we can frame the optimization of the critic as a convex optimization problem, where all local maxima are global maxima. 

For appropriate settings of $\xi_{n,m}$, we can use the Fourier or Taylor bases giving
\begin{eqnarray}
\lefteqn{W_T(P_r, P_g)}\\
&=& \underset{\sum_{n=1}^\infty |n A_{n,1}| \leq 1}{\sup} \sum_{n=1}^\infty A_{n,1}\left( \mathbb{E}_{P_r}\left[x^n\right] - \mathbb{E}_{P_g}\left[x^n\right]\right) \nonumber
\end{eqnarray}
and 
\begin{eqnarray}
\lefteqn{W_F(P_r, P_g)}\\
&=& \underset{\sum_{n=1}^\infty |\pi n A_{n,1}| + |\pi n A_{n,2}| \leq 1}{\sup} \sum_{n=0}^\infty A_{n,1}( \mathbb{E}_{P_r}\left[\cos(n \pi x)\right]\nonumber\\
&&\;\;\;\;\;- \mathbb{E}_{P_g}\left[\cos(n \pi x) \right]) + \nonumber\\
&&\;\;A_{n,2} \left(\mathbb{E}_{P_r}\left[\sin(n \pi x)\right] - \mathbb{E}_{P_g}\left[\sin(n \pi x )\right] \right) \nonumber,
\end{eqnarray}
respectively. 

Thus, by representing the class of critic functions as Taylor or Fourier expansions, we obtain a clean way to enforce the $1$-Lipschitz constraint over the entire domain, while ensuring that gradient-based optimization schemes can find the globally optimal critic. We note that by enforcing an upper bound on the $1$-Lipschitz constraint, we are optimizing over a smaller set of functions. However, we have not noticed this additional constriction to affect performance empirically. 

When minimizing the expresions above, slight modifications must be made for computational tractability. First, we must choose some $N < \infty$ and cut off the remaining terms in the outer sum. Second, we must enforce our constraint as a penalty term in the loss function. Fortunately, neither of these practical considerations change the theoretical guarantees proved above. Particularly, limiting the number of terms in the outer sum to $N$ does not affect the our convexity-based arguments and embedding the constraint into the loss function as a penalty still results in a convex optimization problem.

\section{Connection with Moment Matching and Maximum Mean Discrepancy}

In this section, we review two methods in statistics that exhibit similar characteristics 
to the Wasserstein distance metric and the summable parameterization we presented in this work. 

\subsection{Maximum Mean Discrepancy}

Recent work by \namecite{Gretton2012AKT} explores the ``Maximum Mean Discrepancy'' technique to distinguish between samples drawn from different data sources. The Maximum Mean Discrepancy (also known as an Integral Probability Metric) between two data sources is defined as 
\begin{equation}
\text{MMD}(\mathcal{F}, P_X, P_Y) = \sup_{f \in \mathcal{F}} \mathbb{E}_{P_X}\left[f(X)\right] - \mathbb{E}_{P_Y}\left[f(Y)\right],
\end{equation}
where $P_X$ and $P_Y$ are the distributions of the data sources and $\mathcal{F}$ is some function class that is sufficiently rich that $P_X = P_Y$ when $\text{MMD}(\mathcal{F}, P_X, P_Y) = 0$. 

Notice how the dual-form of Wasserstein distance is a specical case of the above Integral Probability Metric when $\mathcal{F}$ is the set of $1$-Lipschitz functions. 

In their work, \namecite{Gretton2012AKT} explore using Reproducing Kernel Hilbert Spaces~\cite{berlinet2011reproducing} as the function class to perform their maximum mean discrepancy tests. This kernel-based approach is adopted by \namecite{Li2015GenerativeMM} in their work on Generative Moment Matching Networks. This work offers a method that competes directly with GANs. Rather than a mini-max game between a generator and discriminator, Generative Moment Matching Networks boast only needing a generator network that is trained to minimize the Maximum Mean Discrepancy between the real and generated sources. The authors note that their use of kernels approximates matching the moments of the sampled and generated random variables. 

\subsection{Moment Matching}

Moment matching, also known as the ``method-of-moments'' is the process of fitting a model to a distribution by sampling from that distribution and setting the model's parameters to be the distribution's sampled moments. In general, moments can refer to any set of functions that characterize the behavior of a random variable, but they are most commonly represented as the random variable raised to different powers. For any $n \geq 1$, we denote the $n$th moment of a random variable $X$ as 
\begin{equation}
m_n(X) = \mathbb{E}_{P_x} \left[ X^n\right].
\end{equation}

Particularly notice that for critics represented by the Taylor series parameterization, the Wasserstein distance can be expressed as a sum of weighted moments:
\begin{equation}
W_T(P_r, P_g) = \max_{\sum_{n=1}^\infty |n A_{n,1}| \leq 1} \sum_{n=1}^\infty A_{n,1}\left(\mathbb{E}_{P_r}\left[ X^n \right] - \mathbb{E}_{P_g} \left[ X^n \right]\right).
\end{equation}

\section{Experiments}
In the following subsections, we describe our experimental procedure.
\subsection{Domains}
We evaluated our method against three different synthetic data sources and one real-world data source. Our synthetic data sources consisted of a ``sawtooth'' distribution, a discrete distribution with three possible values, and a mixture of two Gaussian distributions. Each distribution was sampled $10,000$ times to construct a dataset that was then used across all experiments and models.  These distributions correpond to the following random variables defined below:
\begin{equation}
\begin{aligned}
&X_\text{sawtooth} = \sqrt{Y_1} - 1\\
&X_\text{discrete} = \frac{1}{2}\left(-\boldsymbol{1}\{ Y_2 < 0.25\} + \boldsymbol{1}\{Y_2 > 0.75\}\right)\\
&X_\text{mixture} = B N_1 + (1 - B) N_2,
\end{aligned}
\end{equation}
where $Y_1, Y_2 \sim \text{Uniform}(0, 1)$, $B \sim \text{Bernoulli}(0.5)$, $N_1 \sim \text{Normal}(0.5, 0.05)$ and $N_2 \sim \text{Normal}(-0.5, 0.05)$. 

Our real-world data source is a collection of city populations from the Free World Cities Database (https://www.maxmind.com/en/free-world-cities-database). We pre-processed this data by applying a logarithmic scaling to the population numbers and normalizing the resulting log-populations to be between $-1$ and $1$. We denote the random variable associated with this data source as $X_\text{cities}$. 

Figure~\ref{fig:true_histograms} characterizes each of these data sources by sampling a million points from each and plotting their histograms.

\begin{figure}
    \centering
    \includegraphics[width=\textwidth/3]{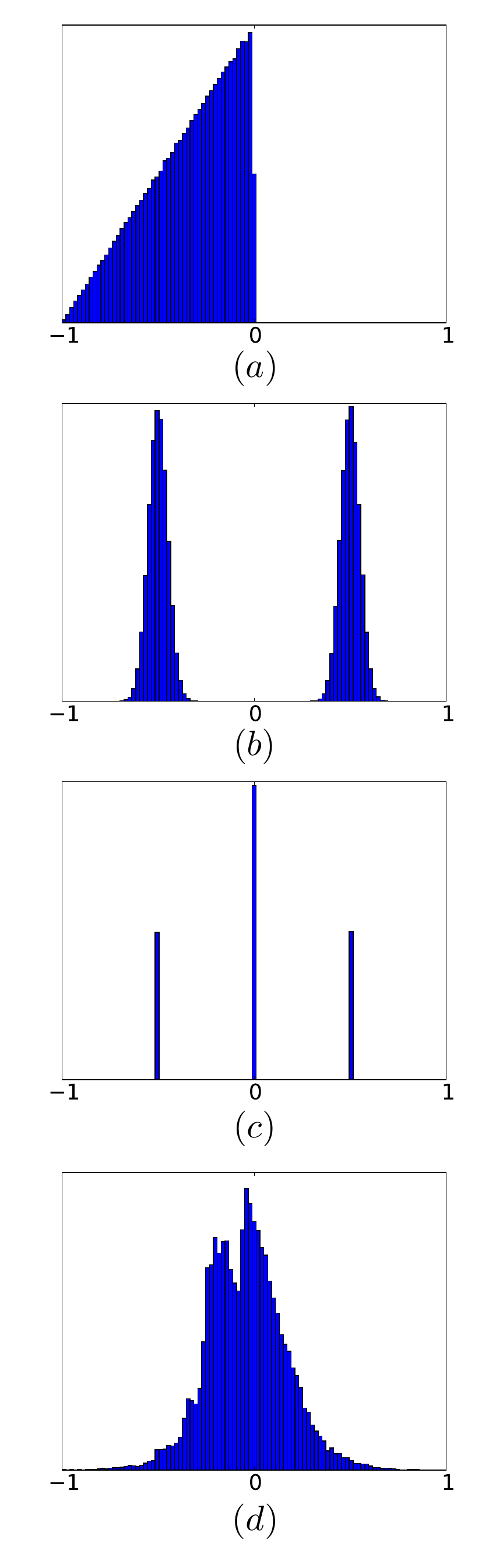}
    \caption{Histograms generated by drawing 1,000,000 samples of random variables (a) $X_\text{sawtooth}$, (b) $X_\text{mixture}$, (c) $X_\text{discrete}$ and (d) $X_\text{cities}$.}
    \label{fig:true_histograms}
\end{figure}

\subsection{Network Architectures}

To maintain consistency between experiments, we used the same generator network architecture for both the wGAN-GP experiments and for our method. This generator network architecture consists of $3$ batch-normalized, fully connected layers with $500$ neurons each and leaky ReLU activation, followed by a single fully connected output layer with $1$ neuron and a $\text{tanh}$ activation. The wGAN-GP experiment used a discriminator with $2$ fully connected layers with $100$ neurons each and leaky ReLU activations, followed by a single fully connected output layer with $1$ neuron and linear activation. Following \namecite{Gulrajani2017ImprovedTO}, we used $\lambda = 10$ to enforce constraints across all experiments. Additionally, we used the AdamOptimizer \cite{Kingma2014AdamAM} with $\beta_1 = 0$, $\beta_2 = 0.9$ and a learning rate of $0.0001$. For all reparameterized critic models we clipped the infinite sums in the expansions at $N = 20$. Additionally, batch normalization \cite{Ioffe2015BatchNA} is used for all generator networks in our experiments. 

\subsection{Evaluation Procedure}

All of the comparison algorithms attempt to learn a representation of the target 1-dimensional probability distribution. For each model, we measure its accuracy by computing the sample Earth-Mover's distance. This quantity is computed by sampling the model $10,000$ times and constructing a histogram out of its sampled outputs. The entries in these histograms are then normalized so that the sum of the bin values is $1$. A similar histogram is then constructed using the true data source, and the Earth-Mover's distance is computed between them. For computing the Earth-Mover's distance, we used the publicly available Python library pyEMD.
% (yyy do I need to source this?). nah
% cool.
For each of the GAN methods, training was conducted over $100,000$ iterations, with an estimate of the Earth-Mover's distance being computed with the training data every $1000$ iterations. At the end of training, the lowest estimate over the course of training is reported as the model's Earth-Mover's distance (EMD). 

\subsection{Results}

We present the results of running $4$ trials for each of the GAN-based models. 
We denote our runs with reparameterized critics as ``Taylor Critic'' and ``Fourier Critic'' for the Fourier Series and Taylor Series reparameterizations, respectively. The best obtained Earth-Mover's distances for each run and model are reported in Tables \ref{tab:gm_runs}, \ref{tab:disc_runs}, \ref{tab:sawtooth_runs} and \ref{tab:cities_runs}. We additionally report the average Earth-Mover's distances across the 4 trials and compare these numbers to the performance of a Kernel Density Estimator as a nonparametric baseline. These results are posted in Table~\ref{tab:average_results}.
\begin{table}[!htb]
\centering
\begin{tabular}{||c||c|c|c|c||} 
\hline
 & $X_\text{mixture}$ & $X_\text{discrete}$ & $X_\text{sawtooth}$ & $X_\text{cities}$ \\ [0.5ex] 
 \hline\hline
 KDE & 0.0073 & 0.01002 & 0.0040 & 0.0027
 \\ 
 \hline
 wGAN-GP & 0.0822 & 0.1318 & 0.26055 & 0.0188 \\
 \hline
 Taylor Critic & 0.0216 & \textbf{0.0106} & 0.0151 & \textbf{0.0096} \\
 \hline
 Fourier Critic & \textbf{0.0186} & 0.0193 & \textbf{0.0109} & 0.0103 \\
 \hline
\end{tabular}
    \caption{Table containing the average Earth-Mover's Distances over the $4$ runs detailed in Tables \ref{tab:gm_runs}, \ref{tab:disc_runs}, \ref{tab:sawtooth_runs} and \ref{tab:cities_runs} for each GAN-based model.}\label{tab:average_results}
\end{table}

\begin{table}[!htb]
\centering
\begin{tabular}{||c||c|c|c|c||} 
\hline
& \multicolumn{4}{c||}{$X_\text{mixture}$} \\
\hline 
 & $1$ & $2$ & $3$ & $4$ \\ [0.5ex] 
 \hline\hline
 wGAN-GP & 0.0206 & 0.0279 & \emph{0.2578} & 0.0226 \\
 \hline
 Taylor Critic & 0.0179 & 0.0216 & 0.0217 & 0.0250 \\
 \hline
 Fourier Critic & 0.0204 & \textbf{0.0164} & 0.0201 & 0.0175 \\
 \hline
\end{tabular}
    \caption{Table of Earth-Mover's distances for $4$ runs of the wGAN-GP, Taylor Critic and Fourier Critic on the Gaussian Mixture dataset. Run~3 of the wGAN-GP illustrates its instability.}\label{tab:gm_runs}
\end{table}

\begin{table}[!htb]
\centering
\begin{tabular}{||c||c|c|c|c||} 
\hline
& \multicolumn{4}{c||}{$X_\text{discrete}$} \\
\hline 
 & $1$ & $2$ & $3$ & $4$ \\ [0.5ex] 
 \hline\hline
 wGAN-GP & 0.1381 & 0.1333 & 0.1314 & 0.1242 \\
 \hline
 Taylor Critic & \textbf{0.0091} & 0.0121 & 0.0103 & 0.0110 \\
 \hline
 Fourier Critic & 0.0129 & 0.0129 & 0.0287 & 0.0226 \\
 \hline
\end{tabular}
    \caption{Table of Earth-Mover's Distances for $4$ runs of the wGAN-GP, Taylor Critic and Fourier Critic on the Discrete dataset.}\label{tab:disc_runs}
\end{table}

\begin{table}[!htb]
\centering
\begin{tabular}{||c||c|c|c|c||} 
\hline
& \multicolumn{4}{c||}{$X_\text{sawtooth}$} \\
\hline 
 & $1$ & $2$ & $3$ & $4$ \\ [0.5ex] 
 \hline\hline
 wGAN-GP & \emph{0.4891} & 0.0226 & \emph{0.4653} & 0.0652 \\
 \hline
 Taylor Critic & 0.0157 & 0.0133 & 0.0152 & 0.0161 \\
 \hline
 Fourier Critic & \textbf{0.0081} & 0.0132 & 0.0132 & 0.0091 \\
 \hline
\end{tabular}
    \caption{Table of Earth-Mover's distances for $4$ runs of the wGAN-GP, Taylor Critic and Fourier Critic on the Sawtooth dataset.  Runs~1 and~3 of the wGAN-GP illustrate its instability.}\label{tab:sawtooth_runs}
\end{table}

\begin{table}[!htb]
\centering
\begin{tabular}{||c||c|c|c|c||} 
\hline
& \multicolumn{4}{c||}{$X_\text{cities}$} \\
\hline 
 & $1$ & $2$ & $3$ & $4$ \\ [0.5ex] 
 \hline\hline
 wGAN-GP & 0.0225 & 0.0189 & 0.0182 & 0.0157 \\
 \hline
 Taylor Critic & 0.0120 & 0.0103 & \textbf{0.0066} & 0.0094 \\
 \hline
 Fourier Critic & 0.0108 & 0.0069 & 0.0086 & 0.0150 \\
 \hline
\end{tabular}
    \caption{Table of Earth-Mover's distances for $4$ runs of the wGAN-GP, Taylor Critic and Fourier Critic on the City Population dataset.}\label{tab:cities_runs}
\end{table}

We observe that both models with reparameterized critics significantly outperform wGAN-GP and are frequently competitive with Kernel Density Estimation. From Tables \ref{tab:gm_runs}, \ref{tab:disc_runs}, \ref{tab:sawtooth_runs} and \ref{tab:cities_runs} we observe that the  reparameterized critic models' worst runs are generally better than the wGAN-GP model's best runs, and the reparameterized critic models have significantly lower variance across runs than wGAN-GP.

Since all GAN-based methods in this paper have the same network architecture for their generators, it is reasonable to attribute this difference to the forms of the critics. As we showed in Theorem \ref{thm:global_max}, the process of optimizing the critic with respect to a given generator cannot ``get stuck'' in some locally maximal region of the space of critics. Thus, as long as the set of critics satisfying $\mathcal{L}_\xi(\cdot) \leq 1$ is sufficiently close to the set of critics satisfying $\| \cdot \|_L \leq 1$, then the generator should always have a clean gradient to follow during its optimization as shown in Lemma~1 of \namecite{Gulrajani2017ImprovedTO}. 

While this does not preclude the possibility that the generator itself could ``get stuck'' during its own optimization against the critic, the difference in consistency across runs between the reparameterized critic models and the wGAN-GP models is evidence that the additional guarantees on reparameterized critics helps empirically. Note that we made every effort to set the hyperparameters of GP-wGAN to reduce or eliminate its instability. It is possible that it would perform better with some other parameter setting, but we were not able to find such a setting. That being said the performance of the reparameterized critic models was relatively unchanged across the parameter settings we explored.

\section{Conclusion and Future Work}
In this work, we illustrated an alternate parameterization of the critic networks that has ideal theoretical properties for gradient-based optimization. We demonstrated that, in the one-dimensional setting, our summable critic models categorically outperform Wasserstein GAN with gradient penalty and are competitive with Kernel Density Estimation on a variety of synthetic and real-world domains. 

While our work on this paper focuses on the one-dimensional setting, there is considerable room to explore extending the approach to higher dimensions. For both the Taylor and Fourier series expansions, there are high-dimensional analogues. These higher-dimensional decompositions generally require exponentially many terms in the number of input dimensions. It may be possible to alleviate this computational cost by exploiting recent techniques to learn sparse polynomials or Fourier series~\cite{Andoni2014LearningSP,Hassanieh2012NearlyOS}. Particularly, while all exponentially many terms of these series may be necessary to model arbitrarily messy functions, it is unlikely that all or even most of them will be required to reasonably approximate the space of $1$-Lipschitz functions.

% \end{document}

\newpage

\bibliographystyle{aaai}
\bibliography{mlittman}

\end{document}